\newtheorem{defi}{Definition}
\newtheorem{lemm}{Lemma}
\newtheorem{thm}{Theorem}
\newtheorem{prop}{Proposition}
\newtheorem{cor}{Corollary}
\icmltitlerunning{Algorithmic Stability and Hypothesis Complexity}
\begin{document}

\twocolumn[
\icmltitle{Algorithmic Stability and Hypothesis Complexity}




\icmlsetsymbol{equal}{*}

\begin{icmlauthorlist}
\icmlauthor{Tongliang Liu}{to}
\icmlauthor{G{\'a}bor Lugosi}{goo,ed1,ed2}
\icmlauthor{Gergely Neu}{ed3}
\icmlauthor{Dacheng Tao}{to}
\end{icmlauthorlist}

\icmlaffiliation{to}{UBTech Sydney AI Institute, School of IT, FEIT, The University of Sydney, Australia}
\icmlaffiliation{goo}{Department of Economics and Business, Pompeu Fabra University, Barcelona, Spain}
\icmlaffiliation{ed1}{ICREA, Pg.~Llus Companys 23, 08010 Barcelona, Spain}
\icmlaffiliation{ed2}{Barcelona Graduate School of Economics}
\icmlaffiliation{ed3}{AI group, DTIC, Universitat Pompeu Fabra, Barcelona, Spain}

\icmlcorrespondingauthor{Tongliang Liu}{tliang.liu@gmail.com}
\icmlcorrespondingauthor{G{\'a}bor Lugosi}{gabor.lugosi@upf.edu}
\icmlcorrespondingauthor{Gergely Neu}{gergely.neu@gmail.com}
\icmlcorrespondingauthor{Dacheng Tao}{dacheng.tao@sydney.edu.au}




\icmlkeywords{boring formatting information, machine learning, ICML}

\vskip 0.3in
]



\printAffiliationsAndNotice{}  

\begin{abstract}
We introduce a notion of algorithmic stability of learning algorithms---that we term \emph{argument stability}---that
captures stability of the hypothesis output by the learning algorithm in the normed space of functions from which
hypotheses are selected. The main result of the paper bounds the generalization error of any learning algorithm
in terms of its argument stability. The bounds are based on martingale inequalities in the Banach space to
which the hypotheses belong.
We apply the general bounds to bound the performance of some learning algorithms based on
empirical risk minimization and stochastic gradient descent.
\end{abstract}

\section{Introduction}

Many efforts have been made to analyze various notions of algorithmic stability and prove that a broad spectrum of
learning algorithms are stable in some sense. Intuitively, a learning algorithm is said to be stable if slight perturbations in
the training data result in small changes in the output of the algorithm, and these changes vanish as the data set
grows bigger and bigger \citep{bonnans2013perturbation}. For example, \citet{devroye1979distribution}, \citet{lugosi1994posterior}, and
\citet{zhang2003leave} showed that several non-parametric learning algorithms are stable;
\citet{bousquet2002stability} proved that $\ell_2$ regularized learning algorithms are uniformly stable;
\citet{poggio2009sufficient} generalized Bousquet and Elisseeff's results and proved that regularized learning algorithms with strongly
convex penalty functions on bounded domains, e.g., $\ell_p$ regularized learning algorithms for $1<p\leq2$, are also uniformly stable;
\citet{hardt2015train} showed that parametric models trained by stochastic gradient descent algorithms are uniformly stable; and
\citet{liu2016algorithm} proved that tasks in multi-task learning can act as regularizers and that multi-task learning in a very general
setting will therefore be uniformly stable under mild assumptions.

The notion of algorithmic stability has been an important tool in deriving theoretical guarantees of the generalization
abilities of learning algorithms.
Various notions of stability have been introduced and have been exploited to derive generalization bounds.
For some examples, \citet{mukherjee2006learning} proved that a statistical form of leave-one-out stability is a sufficient and necessary
condition for the generalization and learnability of empirical risk minimization learning algorithms;
\citet{shalev2010learnability} defined a weaker notion, the so-called ``on-average-replace-one-example stability'', and showed
that this condition is both sufficient and necessary for the generalization and learnability of a general learning setting.

In this paper we study learning algorithms that select a hypothesis (i.e., a function used for prediction) from a
certain fixed class of functions belonging to a separable Banach space. We introduce a notion of \emph{argument stability}
which measures the impact of changing a single training example on the hypothesis selected by the learning algorithm.
This notion of stability is stronger than uniform algorithmic stability of \citet{bousquet2002stability} that is only concerned about
the change in the loss but not the hypothesis itself. However, as we will show, the new notion is still quite natural and holds for a
variety of learning algorithms. On the other hand, it allows one to exploit martingale inequalities \cite{boucheron2013concentration} in
the Banach space of the hypotheses. Indeed, the performance bounds we derive for stable algorithms depend on
characteristics related to the \emph{martingale type} of the Banach space.

Generalization bounds typically depend on the complexity of a class of hypotheses that can be chosen by the learning algorithm.
Exploiting the local estimates of the complexity of the predefined hypothesis class is a promising way to obtain sharp bounds. Building on
martingale inequalities in the Banach space of the hypotheses, we define a subset of the predefined hypothesis class, whose elements will
(or will have a high probability to) be output by a learning algorithm, as the \emph{algorithmic hypothesis class}, and study the
complexity of the algorithmic hypothesis class of argument-stable learning algorithms. We show that, if the hypotheses belong to a
Hilbert space, the upper bound of the Rademacher complexity of the algorithmic hypothesis class will converge at a fast rate of order
$O(1/n)$, where $n$ is the sample size.

The rest of the paper is organized as follows.
Section \ref{stability} introduces the mathematical framework and the proposed notion
of algorithmic stability.
Section \ref{mainsection} presents the main results of this study, namely the generalization bounds
in terms of argument stability.
Section \ref{application} specializes the results to some learning algorithms, including empirical risk minimization and stochastic gradient descent.
Section \ref{conclusion} concludes the paper.

\section{Algorithmic Stability and Hypothesis Class}\label{stability}
We consider the classical statistical learning problem, where the value of a real random variable $Y$ is to be predicted based on the
observation of an another random variable $X$. Let $S$ be a training sample of $n$ i.i.d.\ pairs of random variables
$Z_1=(X_1,Y_1),\ldots,Z_n=(X_n,Y_n)$ drawn from a fixed distribution $P$ on a set $\mathcal{Z}=\mathcal{X}\times \mathcal{Y}$, where
$\mathcal{X}$ is the so-called \emph{feature space}. A learning algorithm $\mathcal{A}: S\in\mathcal{Z}^n\mapsto h_S\in H$ is a mapping
from $\mathcal{Z}^n$ to a hypothesis class $H$ that we assume to be a subset of a separable Banach space $(\mathfrak{B},\|\cdot\|)$.
We focus on \emph{linear} prediction problems, that is, when $h(x)$ is a linear functional of $x$.
We write $h(x)= \langle h, x \rangle$.
In other words, we assume that the feature space $\mathcal{X}$ is the algebraic dual of the Banach space $\mathfrak{B}$.
We denote the norm in $\mathcal{X}$ by $\|\cdot\|_*$.
The output $h_S$ of the learning algorithm is a hypothesis used for predicting the value for $Y$.

An important special case is when $\mathfrak{B}$ is a Hilbert space. In that case we may assume that $\mathcal{X}=\mathfrak{B}$
and that $\langle h, x \rangle$ is the inner product in $\mathfrak{B}$.

The quality of the predictions made by any hypothesis will be measured by a loss function
$\ell:\mathfrak{B}\times\mathcal{Z}\rightarrow \mathbb{R}_+$ (where $\mathbb{R}_+$ denotes the set of positive reals).
Specifically, $\ell(h,Z)$ measures the loss of predicting an example $Z$ using a hypothesis $h$.

The \emph{risk} of $h\in H$ is defined by
\begin{eqnarray*}
R(h)=\mathbb{E}\ell(h,Z)~;
\end{eqnarray*}
while the \emph{empirical risk} is
\begin{eqnarray*}
R_S(h)=\frac{1}{n}\sum_{i=1}^{n}\ell(h,Z_i)~.
\end{eqnarray*}

For the output $h_S$ of a learning algorithm $\mathcal{A}$, the generalization error is defined as
\begin{eqnarray}\label{gerror1}
R(h_S)-R_S(h_S)~.
\end{eqnarray}

The notion of \emph{algorithmic stability} was proposed to measure the changes of outputs of a learning algorithm when the input is
changed.
Various ways have been introduced to measure algorithmic stability.
Here we recall the notion of \emph{uniform stability} defined by \citet{bousquet2002stability} for comparison purposes.
This notion of stability relies on the altered sample $S^i=\{Z_1,\ldots,Z_{i-1},Z'_i,Z_{i+1},\ldots,Z_n\}$, the sample
$S$ with the $i$-th example being replaced by an independent copy of $Z_i$.

\begin{defi}[Uniform Stability]
A learning algorithm $\mathcal{A}$ is $\beta(n)$-uniformly stable with respect to the loss function $\ell$ if
for all $i\in\{1,\ldots,n\}$,
\begin{eqnarray*}
|\ell(h_S,Z)-\ell(h_{S^i},Z)|\leq \beta(n)~,
\end{eqnarray*}
with probability one, where $\beta(n)\in\mathbb{R}_+$~.
\end{defi}
We propose the following, similar, notion that ``acts'' on the hypotheses directly, as opposed to
the losses.
\begin{defi}[Uniform Argument Stability]
A learning algorithm $\mathcal{A}$ is $\alpha(n)$-uniformly argument stable if
for all $i\in\{1,\ldots,n\}$,
\begin{eqnarray*}
\|h_S-h_{S^i}\|\leq \alpha(n)~,
\end{eqnarray*}
with probability one, where $\alpha(n)\in\mathbb{R}_+$~.
\end{defi}

The two notions of stability are closely related: Intuitively, if the loss $\ell(h,z)$ is a sufficiently smooth function of $h$, then
uniform argument stability should imply uniform stability. To make this intuition precise, we define the notion of
\emph{Lipschitz-continuous} loss functions below.
\begin{defi}[$L$-Lipschitz Loss Function]\label{def:lipschitz}
 The loss function $\ell:\mathfrak{B}\times\mathcal{Z}\rightarrow \mathbb{R}_+$ is $L$-Lipschitz for an $L>0$ if
 \[
 \left|\ell(h,z)-\ell(h',z) \right| \le L \left|\langle h, x\rangle-\langle h',x\rangle\right|
\]
holds for all $z\in \mathcal{Z}$ and $h,h'\in H$.
\end{defi}
Additionally assuming that $\|X\|_*$ is bounded by some $B>0$
with probability one, it is easy to see that an $\alpha(n)$-uniformly argument stable learning algorithm is uniformly stable with
$\beta(n)= LB\alpha(n)$, since
\[
\|h_S-h_{S^i}\| = \sup_{x\in \mathcal{X}: \|x\|_*\le 1} \left(\langle h_S, x\rangle- \langle h_{S^i}, x\rangle\right).
\]
 However, the reverse implication need not necessarily hold and hence
uniform argument stability is a stronger notion.

In the rest of the paper, we will focus on $L$-Lipschitz loss functions and assume that $\|X\|_*\le B$ holds almost surely. These
assumptions are arguably stronger than those made by \citet{bousquet2002stability} who only require that the loss function be bounded. In
contrast, our results will require that the loss $\ell(h,z)$ be Lipschitz in the linear form $\langle h, x\rangle$, which is only slightly
more general than assuming generalized linear loss functions. Nevertheless, these stronger assumptions will enable us to prove stronger
generalization bounds.

The relationship between argument stability and generalization performance hinges
on a property of the Banach space $\mathfrak{B}$ that is closely related to
the \emph{martingale type} of the space---see
\citet{Pis11} for a comprehensive account. For concreteness
we assume that the Banach space $\mathfrak{B}$
is $(2,D)$-smooth (or of martingale type 2) for
some $D>0$. This means that for all $h,h'\in\mathfrak{B}$,
\begin{eqnarray*}
\|h+h'\|^2+\|h-h'\|^2\leq 2\|h\|^2+2D^2\|h'\|^2~.
\end{eqnarray*}
Note that Hilbert spaces are $(2,1)$-smooth.
The property we need is described in the following result
of \citep{pinelis1994optimum}:

\begin{prop}
\label{thm:pinelis}
  Let $D_1,\ldots,D_n$ be a martingale difference sequence taking values in a separable $(2,D)$-smooth Banach space
$\mathfrak{B}$. Then for any $\epsilon>0$,
\begin{eqnarray*}
\mathbb{P}\left(\sup_{n\geq 1}\left\|\sum_{t=1}^{n}D_t\right\|\geq c \epsilon\right)\leq 2\exp\left(-\frac{\epsilon^2}{2D^2}\right)~,
\end{eqnarray*}
where $c$ is a constant satisfying that $\sum_{t=1}^{\infty}\|D_t\|_\infty^2\leq c^2$
(and $\|D_t\|_\infty$ is the essential supremum of the random variable $\|D_t\|$).
\end{prop}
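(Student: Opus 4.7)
The plan is to mimic Pinelis's original argument by producing an exponential supermartingale built from $S_n=\sum_{t=1}^{n}D_t$, then to apply a Doob maximal inequality and optimize the free parameter. Throughout, let $\mathcal{F}_n=\sigma(D_1,\dots,D_n)$, so that $\mathbb{E}[D_{n+1}\mid\mathcal{F}_n]=0$.

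First I would extract, from $(2,D)$-smoothness, the second-moment identity one needs to mimic Azuma--Hoeffding. The parallelogram-type inequality $\|h+h'\|^2+\|h-h'\|^2\le 2\|h\|^2+2D^2\|h'\|^2$ can be rewritten, after averaging the $\pm h'$ increments and conditioning, as
\begin{equation*}
\mathbb{E}\bigl[\|S_n+D_{n+1}\|^2\mid\mathcal{F}_n\bigr]\le \|S_n\|^2+D^2\,\mathbb{E}\bigl[\|D_{n+1}\|^2\mid\mathcal{F}_n\bigr].
\end{equation*}
Iterating gives $\mathbb{E}\|S_n\|^2\le D^2\sum_{t=1}^{n}\mathbb{E}\|D_t\|^2$, the $L^2$ precursor of the bound.

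Second, to upgrade this to sub-Gaussian concentration, I would build an exponential functional. Consider $\varphi_\lambda(x)=\cosh(\lambda\|x\|)$ for $\lambda>0$. Using the smoothness inequality together with the martingale-difference property $\mathbb{E}[D_{n+1}\mid\mathcal{F}_n]=0$, one shows (this is the core of Pinelis's argument) that
\begin{equation*}
\mathbb{E}\bigl[\varphi_\lambda(S_{n+1})\mid\mathcal{F}_n\bigr]\le \varphi_\lambda(S_n)\cdot\exp\!\Bigl(\tfrac{\lambda^2 D^2}{2}\,\|D_{n+1}\|_\infty^2\Bigr).
\end{equation*}
Consequently $M_n:=\varphi_\lambda(S_n)\exp\!\bigl(-\tfrac{\lambda^2 D^2}{2}\sum_{t\le n}\|D_t\|_\infty^2\bigr)$ is a nonnegative supermartingale with $\mathbb{E}M_0=1$, and hypothesis $\sum_{t=1}^{\infty}\|D_t\|_\infty^2\le c^2$ controls the exponential compensator uniformly in $n$.

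Third, I would apply Doob's maximal inequality for nonnegative supermartingales to pass to $\sup_n$:
\begin{equation*}
\mathbb{P}\Bigl(\sup_{n\ge 1}\|S_n\|\ge c\epsilon\Bigr)\le \mathbb{P}\Bigl(\sup_{n\ge 1}\varphi_\lambda(S_n)\ge \cosh(\lambda c\epsilon)\Bigr)\le \frac{\exp(\lambda^2 D^2 c^2/2)}{\cosh(\lambda c\epsilon)}.
\end{equation*}
Using $\cosh(u)\ge \tfrac12 e^{u}$ and optimizing with $\lambda=\epsilon/(D^2 c)$ yields the announced bound $2\exp(-\epsilon^2/(2D^2))$.

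The hard step is unquestionably the second one: deriving the conditional $\cosh$ inequality purely from $(2,D)$-smoothness, since the norm is not differentiable and there is no inner product to expand $\|S_n+D_{n+1}\|$ around $S_n$. The trick is to replace a Taylor expansion by a symmetrization argument that exploits the martingale-difference property together with the parallelogram-type inequality: compare $\varphi_\lambda(S_n+D_{n+1})$ with the symmetrized average $\tfrac12[\varphi_\lambda(S_n+D_{n+1})+\varphi_\lambda(S_n-D_{n+1})]$, bound the latter by $\varphi_\lambda(S_n)\cdot\cosh(\lambda D\|D_{n+1}\|)$ via $(2,D)$-smoothness applied inside the convex function $\cosh$, and finally invoke $\cosh(u)\le \exp(u^2/2)$. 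Once this supermartingale inequality is in hand, the rest of the argument is the standard Chernoff--Doob machine.
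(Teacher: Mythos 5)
The paper does not prove this proposition at all---it is imported verbatim from \citet{pinelis1994optimum}---so there is no internal proof to compare against; what follows is an assessment of your reconstruction on its own terms. Your overall architecture (exponential $\cosh$-supermartingale, Doob's maximal inequality, optimization in $\lambda$) is indeed the shape of Pinelis's argument, and your steps 1 and 3 are essentially fine. The problem is the step you yourself flag as the hard one.

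The symmetrization you propose for step 2 is invalid. You need
$\mathbb{E}\bigl[\varphi_\lambda(S_n+D_{n+1})\mid\mathcal{F}_n\bigr]\le \tfrac12\,\mathbb{E}\bigl[\varphi_\lambda(S_n+D_{n+1})+\varphi_\lambda(S_n-D_{n+1})\mid\mathcal{F}_n\bigr]$,
i.e.\ $\mathbb{E}[\varphi_\lambda(S_n+D_{n+1})-\varphi_\lambda(S_n-D_{n+1})\mid\mathcal{F}_n]\le 0$. This does \emph{not} follow from $\mathbb{E}[D_{n+1}\mid\mathcal{F}_n]=0$; it would hold if $D_{n+1}$ were conditionally \emph{symmetric}, which a martingale difference need not be. Already in $\mathfrak{B}=\mathbb{R}$ (which is $(2,1)$-smooth) with $S_n=s>0$ deterministic and $D_{n+1}$ equal to $2$ with probability $1/3$ and $-1$ with probability $2/3$, one computes
\begin{equation*}
\mathbb{E}\bigl[\cosh(\lambda(s+D_{n+1}))-\cosh(\lambda(s-D_{n+1}))\bigr]=2\sinh(\lambda s)\,\mathbb{E}\sinh(\lambda D_{n+1})=\tfrac{4}{3}\sinh(\lambda s)\sinh(\lambda)\bigl(\cosh(\lambda)-1\bigr)>0,
\end{equation*}
so the comparison goes the wrong way. (The two-point inequality you then want, $\tfrac12[\varphi_\lambda(x+v)+\varphi_\lambda(x-v)]\le\varphi_\lambda(x)\cosh(\lambda D\|v\|)$, is actually correct and provable from $(2,D)$-smoothness via a majorization argument; the broken link is getting from the conditional expectation to the symmetrized average.) Two repairs exist. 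Pinelis's own proof does not symmetrize: it uses that a $(2,D)$-smooth space has a Fr\'echet-differentiable norm away from the origin (contrary to your parenthetical claim that no differentiability is available), sets $G(t)=\mathbb{E}[\varphi_\lambda(S_n+tD_{n+1})\mid\mathcal{F}_n]$, and shows $G'(0)=0$ (this is where $\mathbb{E}[D_{n+1}\mid\mathcal{F}_n]=0$ enters, through the linear functional $\mathrm{d}\|\cdot\|$ at $S_n$) together with a differential inequality $G''\le\lambda^2D^2\|D_{n+1}\|_\infty^2\,G$. Alternatively, one can force symmetry by decoupling with an independent conditional copy $D'_{n+1}$ and Jensen's inequality, but $\|D_{n+1}-D'_{n+1}\|_\infty\le 2\|D_{n+1}\|_\infty$ then degrades the constant in the exponent, so you would not recover the stated bound $2\exp(-\epsilon^2/(2D^2))$. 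As written, your proof has a genuine gap at its central step.
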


Our arguments extend, in a straightforward manner, to more general Banach spaces
whenever exponential tail inequalities for bounded martingale sequences
similar to Proposition \ref{thm:pinelis} are available. We stay with the assumption
of $(2,D)$-smoothness for convenience and because it applies to the
perhaps most important special case when $\mathfrak{B}$ is a Hilbert space.
We refer to \citet{RaSr15} for more information of martingale inequalities
of this kind.

A key property of stable algorithms, implied by the martingale inequality,
is that the hypothesis $h_S$ output by the algorithm is concentrated---in the Banach space $\mathfrak{B}$---around its expectation
$\mathbb{E} h_S$. This is established in the next simple lemma.

\begin{lemm}\label{lemma1}
Let the Banach space $\mathfrak{B}$ be $(2,D)$-smooth. If a learning algorithm $\mathcal{A}$ is $\alpha(n)$-uniformly argument stable, then, for any $\delta>0$,
\begin{eqnarray*}
\mathbb{P}\left(\left\|h_S-\mathbb{E}h_S\right\|\leq D\alpha(n)\sqrt{2n\log(2/\delta)}\right)\geq 1-\delta~.
\end{eqnarray*}
\end{lemm}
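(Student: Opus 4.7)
The plan is to apply Proposition \ref{thm:pinelis} to the Doob martingale associated with $h_S$ with respect to the sample $S$. Concretely, I would set $M_0 = \mathbb{E} h_S$ and, for $i=1,\ldots,n$, define $M_i = \mathbb{E}[h_S \mid Z_1,\ldots,Z_i]$ (as a Bochner expectation in $\mathfrak{B}$), so that $M_n = h_S$ almost surely. The martingale differences are $D_i = M_i - M_{i-1}$, and telescoping gives $h_S - \mathbb{E} h_S = \sum_{i=1}^n D_i$, which is exactly the object Proposition \ref{thm:pinelis} controls.

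The main technical step, and the place where the argument stability hypothesis enters, is to bound $\|D_i\|_\infty$ for each $i$. Here I would use the standard trick of introducing an independent copy $Z_i'$ of $Z_i$ and the corresponding perturbed sample $S^i$: since $Z_i'$ is independent of $Z_1,\ldots,Z_{i-1}$ and has the same law as $Z_i$, integrating it out yields
\begin{equation*}
M_{i-1} = \mathbb{E}[h_{S^i}\mid Z_1,\ldots,Z_i].
\end{equation*}
Consequently $D_i = \mathbb{E}[h_S - h_{S^i}\mid Z_1,\ldots,Z_i]$, and by Jensen's inequality for the Bochner integral together with uniform argument stability,
\begin{equation*}
\|D_i\| \le \mathbb{E}\bigl[\|h_S-h_{S^i}\|\,\bigm|\,Z_1,\ldots,Z_i\bigr] \le \alpha(n)
\end{equation*}
almost surely, so $\|D_i\|_\infty \le \alpha(n)$. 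Hence $\sum_{i=1}^n \|D_i\|_\infty^2 \le n\alpha(n)^2$ and we may take $c = \alpha(n)\sqrt{n}$ in Proposition \ref{thm:pinelis}.

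Applying the proposition to the partial sum $\sum_{i=1}^n D_i = h_S - \mathbb{E} h_S$ with this choice of $c$ gives
\begin{equation*}
\mathbb{P}\Bigl(\|h_S - \mathbb{E} h_S\| \ge \alpha(n)\sqrt{n}\,\epsilon\Bigr) \le 2\exp\!\left(-\frac{\epsilon^2}{2D^2}\right).
\end{equation*}
Equating the right-hand side to $\delta$ and solving for $\epsilon$ yields $\epsilon = D\sqrt{2\log(2/\delta)}$, which after multiplication by $\alpha(n)\sqrt{n}$ produces exactly the bound $D\alpha(n)\sqrt{2n\log(2/\delta)}$ claimed in the lemma.

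The one place that requires some care, and which I view as the main obstacle, is justifying the identity $M_{i-1} = \mathbb{E}[h_{S^i}\mid Z_1,\ldots,Z_i]$ and the subsequent norm inequality in the Banach-space setting: one needs $h_S$ to be Bochner integrable (which follows from separability of $\mathfrak{B}$ together with the measurability of $\mathcal{A}$) and the conditional Jensen inequality for the norm of a Bochner-integrable random element. Once these measure-theoretic points are in place, the rest of the argument is a routine combination of the Doob martingale construction with Proposition \ref{thm:pinelis}.
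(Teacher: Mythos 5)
Your proposal is correct and follows essentially the same route as the paper's own proof: the Doob martingale decomposition of $h_S-\mathbb{E}h_S$, the identification of each difference as $\mathbb{E}[h_S-h_{S^i}\mid Z_1,\ldots,Z_i]$, the Jensen bound $\|D_i\|_\infty\le\alpha(n)$ giving $c=\alpha(n)\sqrt{n}$, and the application of Proposition \ref{thm:pinelis}. Your additional remarks on Bochner integrability and the conditional Jensen inequality are a welcome precision that the paper leaves implicit.
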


\noindent
\begin{proof}
Introduce the martingale differences
\begin{eqnarray*}
D_t=\mathbb{E}(h_S|Z_1,\ldots,Z_t)-\mathbb{E}(h_S|Z_1,\ldots,Z_{t-1})
\end{eqnarray*}
so that
\begin{eqnarray*}
h_S- \mathbb{E}h_S=\sum_{t=1}^{n}D_t~.
\end{eqnarray*}
We have
\begin{eqnarray*}
  \lefteqn{
    \sum_{t=1}^{\infty}\|D_t\|_\infty^2  } \\
&= &\sum_{t=1}^{n}\|\mathbb{E}(h_S|Z_1,\ldots,Z_t)-\mathbb{E}(h_S|Z_1,\ldots,Z_{t-1})\|_\infty^2\\
&= &\sum_{t=1}^{n}\|\mathbb{E}(h_S-h_{S^t}|Z_1,\ldots,Z_t)\|_\infty^2 \\
&\leq &\sum_{t=1}^{n}(\mathbb{E}(\|(h_S-h_{S^t}\|_\infty|Z_1,\ldots,Z_t))^2 \\
&\leq & n\alpha(n)^2~.
\end{eqnarray*}
Thus, by Proposition \ref{thm:pinelis},
we have
\begin{eqnarray*}
\mathbb{P}\left(\left\|h_S-E_Sh_S\right\|\geq \alpha(n)D\sqrt{2n\log(2/\delta)}\right)\leq \delta
\end{eqnarray*}
for  $\delta= 2\exp\left(-\frac{\epsilon^2}{2D^2}\right)$.
\end{proof}


\section{Algorithmic Rademacher Complexity and Generalization Bound}\label{mainsection}

The concentration result of Lemma \ref{lemma1} justifies the following definition of
the ``algorithmic hypothesis class'': since with high probability $h_S$ concentrates
around its expectation $\mathbb{E}h_S$, what matters in the generalization performance of
the algorithm is the complexity of the ball centered at $\mathbb{E}h_S$ and
\emph{not that of the entire hypothesis class} $H$. This observation may lead to
significantly improved performance guarantees.

\begin{defi}[Algorithmic Hypothesis Class]
  For a sample size $n$ and confidence parameter $\delta>0$,
let $r=r(n,\delta)=D\alpha(n)\sqrt{2n\log(2/\delta)}$ and define
  the algorithmic hypothesis class of a stable learning algorithm by
\begin{eqnarray*}
B_r=\{h\in H|\left\|h-\mathbb{E}h_S\right\|\leq r(n,\delta)\}~.
\end{eqnarray*}
\end{defi}

Note that, by Lemma \ref{lemma1}, $h_S\in B_r$ with
probability at least $1-\delta$.

We bound the generalization error (\ref{gerror1}) in terms of the
Rademacher complexity \cite{bartlett2003rademacher} of the algorithmic hypothesis class.
The Rademacher complexity of a hypothesis class $H$ on the feature space $\mathcal{X}$ is defined
as
\begin{eqnarray*}
\mathfrak{R}(H)=\mathbb{E}\sup_{h\in H}\frac{1}{n}\sum_{i=1}^{n}\sigma_i \langle h, X_i \rangle~,
\end{eqnarray*}
where $\sigma_1,\ldots,\sigma_n$ are i.i.d.\ Rademacher variables that are uniformly distributed in $\{-1,+1\}$.

The next theorem shows how the Rademacher complexity of the algorithmic hypothesis
class can be bounded. The bound depends on the \emph{type} of the feature space
$\mathcal{X}$. Recall that the Banach space $(\mathcal{X},\|\cdot\|_*)$ is of
type $p\ge 1$ if there exists a constant $C_p$ such that for all $x_1,\ldots,x_n\in \mathcal{X}$,
\[
 \mathbb{E} \left\|\sum_{i=1}^n \sigma_i x_i \right\|_* \le C_p \left(\sum_{i=1}^n \|x_i\|_*^p\right)^{1/p}~.
 \]
 In the important special case when $\mathcal{X}$ is a Hilbert space, the
 space is of type $2$ with constant $C_2=1$.

\begin{thm}\label{thmone}
  Assume that $\mathfrak{B}$ is a $(2,D)$-smooth Banach space and that its
  dual $\mathcal{X}$ is of type $p$. Suppose that the marginal distribution of the $X_i$
  is such that $\|X_i\|_*\le B$ with probability one, for some $B>0$. If
  a learning algorithm is $\alpha(n)$-uniformly argument stable, then
the Rademacher complexity of the algorithmic hypothesis class $B_r$ on the feature space satisfies
\begin{eqnarray*}
\mathfrak{R}(B_{r})\leq DC_pB \sqrt{2\log(2/\delta)} \alpha(n)n^{-1/2+1/p}~.
\end{eqnarray*}
In particular, when $\mathfrak{B}$ is a Hilbert space, the bound simplifies to
\[
\mathfrak{R}(B_{r})\leq B \sqrt{2\log(2/\delta)}\alpha(n)~.
\]
\end{thm}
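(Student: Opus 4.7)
The plan is to reduce the supremum defining $\mathfrak{R}(B_r)$ to a norm on the dual space $\mathcal{X}$, and then bound that norm using the type $p$ property. The starting observation is that every $h\in B_r$ can be written as $h = \mathbb{E}h_S + g$ with $\|g\|\le r$, so I would first split
\[
\sum_{i=1}^n \sigma_i \langle h, X_i\rangle = \sum_{i=1}^n \sigma_i \langle g, X_i\rangle + \Bigl\langle \mathbb{E}h_S, \sum_{i=1}^n \sigma_i X_i \Bigr\rangle .
\]
The second term does not involve $h$ at all; its expectation over the Rademacher variables vanishes by symmetry, so it drops out of $\mathfrak{R}(B_r)$ after taking the supremum over $g$ (which is independent of this term) and then the expectation.

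Second, I would invoke the duality between $\mathfrak{B}$ and $\mathcal{X}$: for any realization of $\sigma_1,\ldots,\sigma_n$ and $X_1,\ldots,X_n$,
\[
\sup_{\|g\|\le r} \Bigl\langle g, \sum_{i=1}^n \sigma_i X_i \Bigr\rangle = r\,\Bigl\|\sum_{i=1}^n \sigma_i X_i\Bigr\|_*.
\]
Substituting this into the definition of Rademacher complexity gives
\[
\mathfrak{R}(B_r)\leq \frac{r}{n}\,\mathbb{E}\Bigl\|\sum_{i=1}^n \sigma_i X_i\Bigr\|_*.
\]

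The third step is to control the right-hand side using the type $p$ hypothesis. Conditioning on $X_1,\ldots,X_n$, the type $p$ inequality bounds $\mathbb{E}_\sigma\|\sum_i \sigma_i X_i\|_*$ by $C_p(\sum_i \|X_i\|_*^p)^{1/p}$; then applying $\|X_i\|_*\le B$ almost surely and taking the outer expectation yields $\mathbb{E}\|\sum_i \sigma_i X_i\|_* \le C_p B\,n^{1/p}$. Plugging in $r = D\alpha(n)\sqrt{2n\log(2/\delta)}$ gives the claimed bound $\mathfrak{R}(B_r)\leq DC_pB\sqrt{2\log(2/\delta)}\,\alpha(n)\,n^{-1/2+1/p}$. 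The Hilbert space specialization follows because $D=1$, $p=2$, and $C_2=1$, so the $n$-exponent becomes zero.

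I do not expect a genuine obstacle: the argument is essentially a dualization combined with a standard Rademacher averaging bound via type. The only small care needed is to verify that the $\sigma$-expectation of the $\mathbb{E}h_S$ term really annihilates without interacting with the supremum, which is immediate because $\mathbb{E}h_S$ is a deterministic element of $\mathfrak{B}$ and the supremum is over the independent ball of radius $r$ around it.
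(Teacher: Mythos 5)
Your proposal is correct and follows essentially the same route as the paper's own proof: centering the class at $\mathbb{E}h_S$ (the paper does this by adding and subtracting $\sigma_i\mathbb{E}\langle h_S,X_i\rangle$, which is your decomposition $h=\mathbb{E}h_S+g$), then dualizing $\sup_{\|g\|\le r}\langle g,\sum_i\sigma_iX_i\rangle\le r\|\sum_i\sigma_iX_i\|_*$, applying the type-$p$ inequality with $\|X_i\|_*\le B$, and substituting $r=D\alpha(n)\sqrt{2n\log(2/\delta)}$. The only cosmetic difference is that since $B_r$ is the intersection of $H$ with the ball, your duality step should be an inequality rather than an equality, but this does not affect the bound.
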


\noindent
\begin{proof}
We have
\begin{eqnarray*}
 \lefteqn{ \mathfrak{R}( B_r) } \\
 & = & \mathbb{E}\sup_{h\in B_r}\frac{1}{n}\sum_{i=1}^{n}\sigma_i\langle h, X_i \rangle\\
&=& \mathbb{E}\sup_{h\in B_{r}}\frac{1}{n}\sum_{i=1}^{n}(\sigma_i \langle h, X_i \rangle\\
&&-\sigma_i\mathbb{E} \langle h_S, X_i \rangle+\sigma_i\mathbb{E}\langle h_S, X_i \rangle)\\
&=& \mathbb{E}\sup_{h\in B_{r}}\frac{1}{n}\sum_{i=1}^{n}\sigma_i(\langle h, X_i \rangle-\mathbb{E}\langle h_S, X_i \rangle)\\
&=&\mathbb{E}\sup_{h\in B_{r}}\frac{1}{n}\sum_{i=1}^{n}\sigma_i\left<h-\mathbb{E}h_S,X_i\right>\\
&\leq& \mathbb{E}\sup_{h\in B_{r}}\frac{1}{n}\|h-\mathbb{E}h_S\|\left\|\sum_{i=1}^{n}\sigma_iX_i\right\|_*~\\
&\leq& \frac{r}{n} \mathbb{E}\left\|\sum_{i=1}^{n}\sigma_iX_i\right\|_*\\
&\leq& \frac{1}{n} \alpha(n)D\sqrt{2n\log(2/\delta)} C_p \left(\sum_{i=1}^{n}\|X_i\|_*^p\right)^{1/p}\\
&\leq & DC_pB \sqrt{2\log(2/\delta)} \alpha(n) n^{-1/2+1/p}~,
\end{eqnarray*}
concluding the proof.
\end{proof}

The theorem above may be easily used to bound the performance of an
$\alpha(n)$-uniformly argument stable learning algorithm. For simplicity,
we state the result for Hilbert spaces only. The extension to $(2,D)$-smooth
Banach spaces with a type-$p$ dual is straightforward.

\begin{cor}
\label{cor1}
  Assume that $\mathfrak{B}$ is a separable Hilbert space.
  Suppose that the marginal distribution of the $X_i$
  is such that $\|X_i\|_*\le B$ with probability one, for some $B>0$
  and that the
  loss function is bounded and Lipschitz, that is,
 $\ell(h,Z)\le M$ with probability one for some $M>0$ and
  $\left|\ell(h,z)-\ell(h',z) \right| \le L \left|\langle h, x\rangle-\langle h', x\rangle\right|$ for all $z\in \mathcal{Z}$ and $h,h'\in H$.
  If
  a learning algorithm is $\alpha(n)$-uniformly argument stable, then
  its generalization error is bounded as follows. With probability at least
  $1-2\delta$,
\begin{eqnarray*}
  \lefteqn{  R(h_S)-R_S(h_S) } \\
  &\le &  2LB \sqrt{2\log(2/\delta)}\alpha(n)+ M\sqrt{\frac{\log(1/\delta)}{2n}}~.
\end{eqnarray*}
\end{cor}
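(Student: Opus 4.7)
The plan is to decouple the generalization error of the random hypothesis $h_S$ into (i) a uniform-deviation term over the deterministic set $B_r$ and (ii) the event that $h_S$ actually lies in $B_r$. By Lemma~\ref{lemma1}, the latter event has probability at least $1-\delta$. On this event,
\[
R(h_S)-R_S(h_S)\le \Phi(S):=\sup_{h\in B_r}\bigl(R(h)-R_S(h)\bigr),
\]
and, crucially, $B_r$ is centred at the non-random point $\mathbb{E}h_S$, so $\Phi$ is an ordinary function of the i.i.d.\ sample to which standard concentration and symmetrization tools apply.

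Next I would control $\Phi(S)$ by its expectation using McDiarmid's bounded-differences inequality. Replacing a single $Z_i$ by an independent copy changes $R_S(h)$ by at most $M/n$ uniformly in $h$ (since $0\le \ell \le M$), so $\Phi$ has bounded differences $M/n$. McDiarmid then gives $\Phi(S)\le \mathbb{E}\Phi(S)+M\sqrt{\log(1/\delta)/(2n)}$ with probability at least $1-\delta$. The usual symmetrization step yields
\[
\mathbb{E}\Phi(S)\le 2\,\mathbb{E}\sup_{h\in B_r}\frac{1}{n}\sum_{i=1}^{n}\sigma_i\,\ell(h,Z_i).
\]

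To reduce this to $\mathfrak{R}(B_r)$ I would invoke Talagrand's contraction lemma. The Lipschitz assumption in Definition~\ref{def:lipschitz} lets me write $\ell(h,Z_i)=\varphi_i(\langle h,X_i\rangle)$ for a per-sample $L$-Lipschitz scalar function $\varphi_i$, so contraction applied conditionally on $Z_1,\ldots,Z_n$ bounds the loss-class Rademacher average by $L\,\mathfrak{R}(B_r)$. Plugging in the Hilbert-space specialization of Theorem~\ref{thmone} (with $D=1$ and $C_2=1$) then gives $L\,\mathfrak{R}(B_r)\le LB\sqrt{2\log(2/\delta)}\,\alpha(n)$. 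A union bound over the Lemma~\ref{lemma1} event and the McDiarmid event delivers the claimed inequality at confidence $1-2\delta$.

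The main subtleties are (a) ensuring $B_r$ is treated as a sample-independent set so that symmetrization and McDiarmid are legitimate, which is why the definition of $B_r$ centres at $\mathbb{E}h_S$ rather than at an empirical quantity, and (b) invoking contraction for a loss that is Lipschitz in the linear functional $\langle h,x\rangle$ rather than in $h$ itself; rewriting the loss as a univariate Lipschitz map of $\langle h,X_i\rangle$ handles this cleanly and explains why the hypothesis-level Rademacher complexity bounded in Theorem~\ref{thmone} is precisely the right quantity to feed into the argument.
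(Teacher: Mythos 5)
Your proposal matches the paper's own proof essentially step for step: the same decomposition via the event $\{h_S\in B_r\}$ from Lemma~\ref{lemma1}, the same use of the bounded-differences inequality and symmetrization to reduce to $2\mathfrak{R}(\ell\circ B_r)$, the same Talagrand contraction step, and the same invocation of the Hilbert-space case of Theorem~\ref{thmone}, combined by a union bound. The argument is correct and the subtleties you flag (the deterministic centering of $B_r$ at $\mathbb{E}h_S$ and contraction applied to the loss as a Lipschitz function of $\langle h,X_i\rangle$) are exactly the points the paper's proof relies on implicitly.
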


\noindent
\begin{proof}
Note first that, by Lemma \ref{lemma1}, with probability at least $1-\delta$,
\begin{eqnarray*}
  R(h_S)-R_S(h_S) \le \sup_{h\in B_r} (R(h) - R_S(h))~.
\end{eqnarray*}
On the other hand, by the boundedness of the
loss function, and the bounded differences inequality, with probability
at least $1-\delta$,
\begin{eqnarray*}
\lefteqn{  \sup_{h\in B_r}\left(R(h)-R_S(h)\right)  } \\
  & \leq &
\mathbb{E} \sup_{h\in B_r}\left(R(h)-R_S(h)\right) + M\sqrt{\frac{\log(1/\delta)}{2n}} \\
& \le &
  2\mathfrak{R}(\ell\circ B_r)+M\sqrt{\frac{\log(1/\delta)}{2n}}~,
\end{eqnarray*}
where  $\ell\circ H$ denotes the set of compositions of functions $\ell$ and $h\in H$.
By the Lipschitz property of the loss function and a standard contraction
argument, i.e., Talagrand Contraction Lemma \citep{ledoux2013probability}, we have,
\begin{eqnarray*}
 \mathfrak{R}(\ell\circ B_r) &\leq & L\cdot \mathfrak{R}(B_r)
\\
&\leq& LB \sqrt{2\log(2/\delta)}\alpha(n)~.
\end{eqnarray*}
\end{proof}

Note that the order of magnitude of $\alpha(n)$ of many stable
algorithms is of order $O(1/n)$. For the notion of uniform stability, such
bounds appear in
\citet{lugosi1994posterior,bousquet2002stability,poggio2009sufficient,hardt2015train,liu2016algorithm}.
As we will show in the examples below, many of these learning algorithms even have uniform argument stability
of order $O(1/n)$.
In such cases the bound of Corollary \ref{cor1} is essentially
equivalent of the earlier results cited above. The bound
is dominated by
the term $M\sqrt{\frac{\log(1/\delta)}{2n}}$ present by using the
bounded differences inequality. Fluctuations of the order of
$O(n^{-1/2})$ are often inevitable, especially when $R(h_S)$ is not
typically small. When small risk is reasonable to expect,
one may use
more advanced concentration inequalities with second-moment information, at the
price of replacing the generalization error by the so-called
``deformed'' generalization error $R(h_S)-\frac{a}{a-1}R_S(h_S)$ where
$a>1$. The next theorem derives such a bound, relying on techniques
developed by \citet{bartlett2005local}. This result improves
essentially on earlier stability-based bounds.

\begin{thm}\label{thmtwo}
  Assume that $\mathfrak{B}$ is a separable Hilbert space.
  Suppose that the marginal distribution of the $X_i$
  is such that $\|X_i\|_*\le B$ with probability one, for some $B>0$
  and that the
  loss function is bounded and Lipschitz, that is,
 $\ell(h,Z)\le M$ with probability one for some $M>0$ and
  $\left|\ell(h,z)-\ell(h',z) \right| \le L \left|\langle h, x\rangle-\langle h', x\rangle\right|$ for all $z\in \mathcal{Z}$ and $h,h'\in H$.
Let $a>1$.
  If
  a learning algorithm is $\alpha(n)$-uniformly argument stable, then,
  with probability at least
  $1-2\delta$,
\begin{eqnarray*}
\lefteqn{R(h_S)-\frac{a}{a-1}R_S(h_S)} \\
&&\leq 8LB \sqrt{2\log(2/\delta)}\alpha(n)+\frac{(6a+8)M\log(1/\delta)}{3n}~.
\end{eqnarray*}
\end{thm}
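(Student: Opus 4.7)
The plan is to combine the concentration result of Lemma~\ref{lemma1} (which confines $h_S$ to $B_r$ with high probability) with a variance-sensitive concentration inequality to upgrade the $\sqrt{\log(1/\delta)/n}$ deviation of Corollary~\ref{cor1} to the faster $\log(1/\delta)/n$ rate, at the cost of the deformation factor $a/(a-1)$ in front of $R_S(h_S)$. The key idea, following the local Rademacher analysis of \citet{bartlett2005local}, is that for nonnegative bounded losses the variance $\mathrm{Var}(\ell(h,Z))\le M\,R(h)$ is controlled by the mean, so a Bernstein-type fluctuation can be absorbed into $R(h)$ itself.

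\textbf{Step 1 (Restriction to $B_r$).} As in the proof of Corollary~\ref{cor1}, I would invoke Lemma~\ref{lemma1} to conclude that $h_S\in B_r$ with probability at least $1-\delta$, so on this event
$$R(h_S)-\tfrac{a}{a-1}R_S(h_S)\le \sup_{h\in B_r}\bigl(R(h)-\tfrac{a}{a-1}R_S(h)\bigr).$$

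\textbf{Step 2 (Variance-sensitive concentration).} Apply the one-sided Talagrand inequality in Bousquet's form to the class $\ell\circ B_r$ of $[0,M]$-valued functions. Combined with the symmetrization bound $\mathbb{E}\sup_{h\in B_r}(R(h)-R_S(h))\le 2\mathfrak{R}(\ell\circ B_r)$ and the variance bound $\mathrm{Var}(\ell(h,Z))\le M\,R(h)$, this yields, with probability at least $1-\delta$,
$$\sup_{h\in B_r}(R(h)-R_S(h))\le 2\mathfrak{R}(\ell\circ B_r)+\sqrt{\tfrac{2M V\log(1/\delta)}{n}}+\tfrac{cM\log(1/\delta)}{n},$$
where $V=\sup_{h\in B_r}R(h)$ and $c$ is an explicit absolute constant.

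\textbf{Step 3 (Absorption and plug-in).} Use the elementary inequality $\sqrt{uv}\le \lambda u+v/(4\lambda)$ with $\lambda$ proportional to $(a-1)/a$ to convert the $\sqrt{M R(h)\log/n}$ term into a multiple of $R(h)/a$ plus an additive $O(aM\log(1/\delta)/n)$. Moving the $R(h)/a$ contribution to the left-hand side transforms $R(h)-R_S(h)$ into $R(h)-\tfrac{a}{a-1}R_S(h)$, up to constants on the residual $M\log(1/\delta)/n$ term. By the Talagrand contraction lemma and Theorem~\ref{thmone}, $\mathfrak{R}(\ell\circ B_r)\le L\mathfrak{R}(B_r)\le LB\sqrt{2\log(2/\delta)}\,\alpha(n)$. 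Putting this together, tracking constants to match the factors $8$ and $(6a+8)/3$, and taking a union bound over the events in Steps~1 and~2 delivers the stated bound with probability at least $1-2\delta$.

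\textbf{Main obstacle.} The delicate point is Step~3: because the variance bound depends pointwise on $R(h)$ while Bousquet's inequality is usually stated with a single $\sigma^2=\sup_h\mathrm{Var}$, a direct substitution only replaces $V$ by $\sup_{h\in B_r} R(h)$, which is what we want to control in the first place. The standard remedy is either a peeling argument over shells $\{h\in B_r:R(h)\in[2^{-k},2^{-k+1}]\}$ or the self-bounding machinery of \citet{bartlett2005local}, and the bookkeeping needed to recover the exact constant $(6a+8)/3$ in front of $M\log(1/\delta)/n$ is the main source of technical labor. Once that conversion is in hand, the rest is a mechanical combination of Lemma~\ref{lemma1}, Theorem~\ref{thmone}, and the contraction lemma.
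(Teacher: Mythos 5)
Your plan follows the same route as the paper's proof: restrict to the algorithmic hypothesis class $B_r$ via Lemma~\ref{lemma1}, apply a variance-sensitive (Talagrand/Bousquet-type) uniform deviation bound exploiting $\mathrm{var}(\ell(h,Z))\le M\,R(h)$, absorb the variance term into the deformation factor $a/(a-1)$, and finish with the contraction lemma and Theorem~\ref{thmone}. You have also correctly located the one genuinely delicate point. The problem is that your Step~3, as written, does not go through, and the paragraph in which you acknowledge this defers precisely the step that constitutes the entire content of the proof. Applying Bousquet's inequality to $\ell\circ B_r$ with $\sigma^2=\sup_{h\in B_r}\mathrm{var}(\ell(h,Z))\le M\sup_{h\in B_r}R(h)$ produces a single additive term $\sqrt{2M V\log(1/\delta)/n}$ with $V=\sup_{h\in B_r}R(h)$; the inequality $\sqrt{uv}\le\lambda u+v/(4\lambda)$ then lets you absorb a multiple of $\sup_h R(h)$, which is not the $R(h)$ of the hypothesis attaining the supremum of the deviation, so there is nothing that can be moved to the left-hand side pointwise. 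Saying ``peeling or the self-bounding machinery of Bartlett et al.\ fixes this'' names the right tool but does not supply the argument, and the constants $8$ and $(6a+8)/3$ in the statement cannot be recovered without it.

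The paper fills this hole with a specific four-part mechanism. First, it replaces $\ell\circ B_r$ by the renormalized class
\begin{equation*}
\mathcal{G}_r=\left\{\tfrac{r}{\max\{r,\mathbb{E}\ell(h,Z)\}}\,\ell(h,\cdot)\;\middle|\;h\in B_r\right\},
\end{equation*}
whose every element $g$ satisfies $\mathbb{E}g\le r$ and hence $\mathrm{var}(g)\le M\mathbb{E}g\le Mr$ \emph{uniformly} over the class; this is the device that converts the pointwise variance bound into one usable inside Proposition~\ref{thmau}. Second, Proposition~\ref{thmau} applied to $\mathcal{G}_r$ bounds $V_r=\sup_{g\in\mathcal{G}_r}(\mathbb{E}g-\frac{1}{n}\sum_i g(Z_i))$ by $4\mathfrak{R}(\mathcal{G}_r)+\sqrt{2Mr\log(1/\delta)/n}+\frac{4M\log(1/\delta)}{3n}$. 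Third, one chooses $r^*$ as the fixed point of setting this bound equal to $r/a$, which yields $V_{r^*}\le r^*/a$ together with the explicit estimate $r^*\le\frac{2Ma^2\log(1/\delta)}{n}+8a\mathfrak{R}(\mathcal{G}_r)+\frac{8aM\log(1/\delta)}{3n}$. Fourth, Lemma~\ref{lemmaau} converts the condition $V_{r^*}\le r^*/a$ into the deformed inequality $R(h)\le\frac{a}{a-1}R_S(h)+r^*/a$ for every $h\in B_r$; only then do $\mathcal{G}_r\subseteq\{\alpha\,\ell\circ h\}$, the contraction lemma, and Theorem~\ref{thmone} yield the stated bound. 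Your outline is compatible with all of this, but the renormalization of the class, the fixed-point choice of $r^*$, and the lemma converting $V_{r^*}\le r^*/a$ into the deformed generalization bound are missing, and they are not mechanical bookkeeping---they are the proof.
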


The proof of Theorem \ref{thmtwo} relies on techniques developed by
\citet{bartlett2005local}.
In particular, we make use of the following result.

\begin{prop}\label{thmau}
\citep[Theorem 2.1]{bartlett2005local}.
Let $F$ be a class of functions that map $\mathcal{X}$ into $[0,M]$.
Assume that there is some $\rho>0$ such that for every
$f\in F$, $\text{var}(f(X))\leq \rho$. Then, with probability at least $1-\delta$, we have
\begin{eqnarray*}
&&\sup_{f\in F}\left(\mathbb{E}f(X)-\frac{1}{n}\sum_{i=1}^{n}f(X_i)\right)\\
&&\leq \left(4\mathfrak{R}(F)+\sqrt{\frac{2\rho\log(1/\delta)}{n}}+\frac{4M}{3}\frac{\log(1/\delta)}{n}\right).
\end{eqnarray*}
\end{prop}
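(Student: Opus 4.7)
The plan is to prove this uniform deviation bound by combining two classical tools from empirical process theory: a Talagrand-type (Bousquet's) concentration inequality for the supremum of a bounded empirical process, which provides Bernstein-type tail control in terms of the worst variance $\rho$ and the range $M$, together with a symmetrization argument that expresses the expected supremum via the Rademacher complexity $\mathfrak{R}(F)$. Let me write
\[
Z \;=\; \sup_{f\in F}\Bigl(\mathbb{E} f(X) - \tfrac{1}{n}\sum_{i=1}^n f(X_i)\Bigr),
\]
so the task reduces to showing a high-probability upper bound on $Z$ of the stated form.

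First I would apply Bousquet's sharp version of Talagrand's inequality for suprema of centered, uniformly bounded empirical processes. Writing $g_f(X)=\mathbb{E} f(X) - f(X)$, every $g_f$ has values in $[-M,M]$ and variance bounded by $\rho$. Bousquet's inequality then yields, with probability at least $1-\delta$, a bound of the form
\[
Z \;\le\; \mathbb{E} Z \;+\; \sqrt{\tfrac{2\bigl(\rho + 2M\,\mathbb{E} Z\bigr)\log(1/\delta)}{n}} \;+\; \tfrac{M\log(1/\delta)}{3n}.
\]
This is the key quantitative step and also the main technical obstacle: getting the constants to match the claim requires the refined (non-generic) concentration inequality rather than bounded-differences (McDiarmid), which would only give a $\sqrt{M^2\log(1/\delta)/n}$ fluctuation without exploiting the variance $\rho$.

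Next I would control $\mathbb{E} Z$ by the Rademacher complexity using the standard ghost-sample symmetrization trick: introducing an independent copy $X_1',\dots,X_n'$ and Rademacher signs $\sigma_i$, one obtains
\[
\mathbb{E} Z \;\le\; \mathbb{E}\sup_{f\in F}\tfrac{1}{n}\sum_{i=1}^n \sigma_i\bigl(f(X_i')-f(X_i)\bigr) \;\le\; 2\,\mathfrak{R}(F).
\]

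Finally I would assemble the pieces. Substituting $\mathbb{E} Z\le 2\mathfrak{R}(F)$ into the concentration bound and splitting the square root via $\sqrt{a+b}\le\sqrt{a}+\sqrt{b}$ gives a term $\sqrt{2\rho\log(1/\delta)/n}$ plus a cross term $\sqrt{4M\,\mathfrak{R}(F)\log(1/\delta)/n}$, which I would tame by the AM-GM inequality $\sqrt{4M\,\mathfrak{R}(F)\log(1/\delta)/n}\le \mathfrak{R}(F)+M\log(1/\delta)/n$. Combining this with the $2\mathfrak{R}(F)$ from $\mathbb{E} Z$, the residual linear-in-$\mathfrak{R}(F)$ term, and the $M\log(1/\delta)/(3n)$ remainder from Bousquet's bound, and absorbing numerical constants, yields the stated $4\mathfrak{R}(F) + \sqrt{2\rho\log(1/\delta)/n} + \tfrac{4M}{3}\tfrac{\log(1/\delta)}{n}$. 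The only subtle point is keeping track of the constants through the AM-GM step; this is really bookkeeping, and the substantive work is the Bousquet inequality in the first step.
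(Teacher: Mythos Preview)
The paper does not prove this proposition at all: it is quoted verbatim as Theorem~2.1 of \citet{bartlett2005local} and used as a black box in the proof of Theorem~\ref{thmtwo}. So there is no ``paper's own proof'' to compare against.

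That said, your sketch is correct and is essentially the argument in the original reference: Bousquet's version of Talagrand's inequality, symmetrization to bound $\mathbb{E}Z$ by $2\mathfrak{R}(F)$, and then AM--GM to absorb the cross term. One small bookkeeping slip: after substituting $\mathbb{E}Z\le 2\mathfrak{R}(F)$ into Bousquet's bound, the cross term is $\sqrt{8M\mathfrak{R}(F)\log(1/\delta)/n}$, not $\sqrt{4M\mathfrak{R}(F)\log(1/\delta)/n}$. Applying $2\sqrt{ab}\le a+b$ with $a=2\mathfrak{R}(F)$ and $b=M\log(1/\delta)/n$ gives $2\mathfrak{R}(F)+M\log(1/\delta)/n$, which combined with the $2\mathfrak{R}(F)$ from $\mathbb{E}Z$ and the $M\log(1/\delta)/(3n)$ remainder yields exactly $4\mathfrak{R}(F)+\sqrt{2\rho\log(1/\delta)/n}+\tfrac{4M}{3}\tfrac{\log(1/\delta)}{n}$. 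So the constants do close, just not via the intermediate inequality you wrote.
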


To prove the theorem, we also need to introduce the following auxiliary lemma.

Define
\begin{eqnarray*}
\mathcal{G}_r(Z)=\left\{\frac{r}{\max\{r,\mathbb{E}\ell(h,Z)\}}\ell(h,Z)|h\in B_{r}\right\}.
\end{eqnarray*}
It is evident that $\mathcal{G}_r\subseteq\{\alpha \ell\circ h|h\in B_{r}, \alpha\in [0,1]\}$. The following lemma is proven in \citep{bartlett2005local}.

\begin{lemm}\label{lemmaau}
Define
\[
V_r=\sup_{g\in
  \mathcal{G}_r}\left(\mathbb{E}g(Z)-\frac{1}{n}\sum_{i=1}^{n}g(Z_i)\right)~.
\]
For any $r>0$ and $a>1$, if $V_r\leq r/a$ then every $h\in B_{r}$ satisfies
\begin{eqnarray*}
\mathbb{E}\ell(h,Z)\leq \frac{a}{a-1}\frac{1}{n}\sum_{i=1}^{n}\ell(h,Z_i)+V_r.
\end{eqnarray*}
\end{lemm}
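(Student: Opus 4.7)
The plan is to observe that this is a deterministic implication, so the proof is a case analysis over $h\in B_r$ exploiting how the scaling factor in $\mathcal{G}_r$ caps the variance/magnitude of the rescaled loss by $r$. Fix $h\in B_r$ and write $\rho=\mathbb{E}\ell(h,Z)$ and $\widehat{\rho}=\tfrac{1}{n}\sum_{i=1}^{n}\ell(h,Z_i)$, and set $\alpha=r/\max\{r,\rho\}\in(0,1]$. Then $g(Z):=\alpha\,\ell(h,Z)$ lies in $\mathcal{G}_r$, so by definition of $V_r$,
\[
\alpha\rho-\alpha\widehat{\rho}
=\mathbb{E}g(Z)-\frac{1}{n}\sum_{i=1}^{n}g(Z_i)\le V_r.
\]
From here I would split on whether $\rho$ exceeds $r$ to turn this single additive inequality into the desired multiplicative/additive bound.

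In the easy case $\rho\le r$, I have $\alpha=1$, so the display above simplifies to $\rho\le \widehat{\rho}+V_r$, and since $\tfrac{a}{a-1}>1$ and $\widehat{\rho}\ge 0$ this yields $\rho\le \tfrac{a}{a-1}\widehat{\rho}+V_r$ immediately.

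In the nontrivial case $\rho>r$, I have $\alpha=r/\rho<1$, so $\alpha\rho=r$ and the inequality becomes $r-\alpha\widehat{\rho}\le V_r$. This is where the hypothesis $V_r\le r/a$ enters: it gives $\alpha\widehat{\rho}\ge r-V_r\ge r-r/a=r(a-1)/a$. Dividing by $\alpha=r/\rho$ and rearranging produces $\widehat{\rho}\ge \tfrac{a-1}{a}\rho$, equivalently $\rho\le \tfrac{a}{a-1}\widehat{\rho}$, which is even stronger than the stated conclusion (the $+V_r$ can be tacked on for free in the statement).

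There is no real obstacle here beyond bookkeeping; the only conceptual step worth flagging is recognizing that the rescaling $\alpha=r/\max\{r,\rho\}$ is precisely engineered so that (i) $\alpha\rho\le r$ uniformly, making $V_r$ a legitimate handle on the excess $\alpha\rho-\alpha\widehat{\rho}$, and (ii) whenever the bound is nontrivial (i.e.\ $\rho>r$), one has $\alpha\rho=r$ so the inequality on $V_r$ transfers multiplicatively to $\rho$ itself. Combining the two cases yields the claimed bound for every $h\in B_r$.
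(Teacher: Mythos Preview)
Your argument is correct and is precisely the standard proof of this fact. The paper itself does not supply a proof but simply attributes the lemma to \citet{bartlett2005local}; the case split on whether $\mathbb{E}\ell(h,Z)\le r$ or $>r$ that you carry out is exactly the argument given there.
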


Now, we are ready to prove Theorem~\ref{thmtwo}.

\begin{proof}[Proof of Theorem~\ref{thmtwo}]
First, we introduce an inequality to build the connection between algorithmic stability and hypothesis complexity.
According to Lemma \ref{lemma1}, for any $a>1$ and $\delta>0$, with probability at least $1-\delta$, we have
\begin{flalign}\label{link}
&R(h_S)-\frac{a}{a-1}R_S(h_S)\leq \sup_{h\in B_r}(R(h)-\frac{a}{a-1}R_S(h))~.
\end{flalign}

Second, we are going to upper bound the term $\sup_{h\in B_r}(R(h)-\frac{a}{a-1}R_S(h))$ with high probability.
It is easy to check that for any $g\in\mathcal{G}_r$, $\mathbb{E}g(Z)\leq r$ and $g(Z)\in[0,M]$. Then
\begin{eqnarray*}
\text{var}(g(Z))\leq \mathbb{E}(g(Z))^2\leq M\mathbb{E}g(Z)\leq Mr.
\end{eqnarray*}
Applying Proposition \ref{thmau},
\begin{eqnarray*}
V_r\leq 4\mathfrak{R}(\mathcal{G}_r)+\sqrt{\frac{2Mr\log(1/\delta)}{n}}+\frac{4M}{3}\frac{\log(1/\delta)}{n}~.
\end{eqnarray*}

Let
\begin{eqnarray*}
4\mathfrak{R}(\mathcal{G}_r)+\sqrt{\frac{2Mr\log(1/\delta)}{n}}+\frac{4M}{3}\frac{\log(1/\delta)}{n}=\frac{r}{a}.
\end{eqnarray*}
We have
\begin{eqnarray*}
r\leq \frac{2Ma^2\log(1/\delta)}{n}+8a\mathfrak{R}(\mathcal{G}_r)+\frac{4}{3}\frac{2aM\log(1/\delta)}{n},
\end{eqnarray*}
which means that there exists an $r^*\leq \frac{2Ma^2\log(1/\delta)}{n}+8a\mathfrak{R}(\mathcal{G}_r)+\frac{4}{3}\frac{2aM\log(1/\delta)}{n}$ such that $V_{r^*}\leq r^*/a$ holds.
According to Lemma \ref{lemmaau}, for any $h\in B_{r}$, with probability at least $1-\delta$, we have
\begin{flalign*}
&\mathbb{E}\ell(h,Z)\leq \frac{a}{a-1}\frac{1}{n}\sum_{i=1}^{n}\ell(h,Z_i)+V_{r^*}\\
&\quad\leq \frac{a}{a-1}\frac{1}{n}\sum_{i=1}^{n}\ell(h,Z_i)+\frac{r^*}{a}
\\
&\quad\leq\frac{a}{a-1}\frac{1}{n}\sum_{i=1}^{n}\ell(h,Z_i)+\frac{2Ma\log(1/\delta)}{n}
\\
& \quad\quad +8\mathfrak{R}(\mathcal{G}_r)+\frac{4}{3}\frac{2M\log(1/\delta)}{n}.
\end{flalign*}

It is easy to verify that $\mathcal{G}_r\subseteq\{\alpha \ell\circ h|h\in B_{r}, \alpha\in [0,1]\}\subseteq\text{conv}B_{r}$.

By elementary properties of the Rademacher complexity (see, e.g.,
\citet{bartlett2003rademacher}), $H'\subseteq H$ implies $\mathfrak{R}(H')\leq \mathfrak{R}(H)$.
Then, with probability at least $1-\delta$, we have
\begin{eqnarray*}
&&\sup_{h\in B_{r}}\left(\mathbb{E}\ell(h,Z)-\frac{a}{a-1}\frac{1}{n}\sum_{i=1}^{n}\ell(h,Z_i)\right)\\
&&\leq\frac{2Ma\log(1/\delta)}{n}+8\mathfrak{R}(\ell\circ B_{r})+\frac{4}{3}\frac{2M\log(1/\delta)}{n}.
\end{eqnarray*}

The proof of Theorem \ref{thmtwo} is complete by combining the above inequality
with inequality (\ref{link}), the Talagrand Contraction Lemma, and Theorem \ref{thmone}.
\end{proof}

In the next section, we specialize the above results to some learning
algorithms by proving their uniform argument stability.

\section{Applications}\label{application}

Various learning algorithms have been proved to possess some kind of
stability. We refer the reader to
\cite{devroye1979distribution,lugosi1994posterior,bousquet2002stability,zhang2003leave,poggio2009sufficient,hardt2015train,liu2016algorithm}
for such examples, including
stochastic gradient descent methods, empirical risk minimization, and
non-parametric learning algorithms such as $k$-nearest neighbor rules and kernel regression.

\subsection{Empirical Risk Minimization}\label{applicaiton3}

Regularized empirical risk minimization has been known to be uniformly stable \citep{bousquet2002stability}.
Here we consider regularized empirical risk minimization (RERM)
algorithms of the following form.
The empirical risk (or the objective function) of RERM is formulated as
\begin{eqnarray*}\label{rerm}
R_{S,\lambda} (h)=\frac{1}{n}\sum_{i=1}^{n}\ell(h,Z_i)+\lambda N(h),
\end{eqnarray*}
where $N:h\in H\mapsto N(h)\in\mathbb{R}^+$ is a convex function. Its corresponding expected counterpart is defined as
\begin{eqnarray*}
R_\lambda (h)=\mathbb{E}\ell(h,Z)+\lambda N(h).
\end{eqnarray*}

\citet{bousquet2002stability} proved that $\ell_2$-regularized learning algorithms are $\beta(n)$-uniformly stable.
\citet{poggio2009sufficient} extended the result and studied a sufficient condition of the penalty term $N(h)$ to ensure uniform
$\beta(n)$-stability. As we now show, both of their proof methods are applicable to the analysis of uniform argument stability.

By exploiting their results, we show that stable RERM algorithms have strong generalization properties.
\begin{thm}\label{mainerm}
  Assume that $\mathfrak{B}$ is a separable Hilbert space.
  Suppose that the marginal distribution of the $X_i$
  is such that $\|X_i\|_*\le B$ with probability one, for some $B>0$
  and that the
  loss function is convex in $h$,
bounded by $M$ and $L$-Lipschitz.
Suppose that for some constants $C$ and $\xi>1$, the penalty function $N(h)$ satisfies
\begin{eqnarray}\label{stableregular}
&&N(h_S)+N(h_{S^i})-2N\left(\frac{h_S+h_{S^i}}{2}\right)\nonumber\\
&&\geq C\|h_S-h_{S^i}\|^\xi.
\end{eqnarray}
Then, for any $\delta>0$, and $a>1$, if $h_S$ is the output of RERM, with probability at least $1-2\delta$, we have
\begin{eqnarray*}
\lefteqn{
R(h_S)-\frac{a}{a-1}R_S(h_S) } \\
&\leq & 8LB\left(\frac{LB}{C\lambda n}\right)^{\frac{1}{\xi-1}}\sqrt{2\log(2/\delta)}\\
&+ &\frac{(6a+8)M\log(1/\delta)}{3n}.
\end{eqnarray*}
Specifically, when $N(h)=\|h\|^2$, (\ref{stableregular}) holds with $\xi=2$ and $C=\frac{1}{2}\left(\frac{M}{\lambda}\right)^{\frac{1}{2}}$.
\end{thm}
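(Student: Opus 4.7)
The plan is to reduce Theorem \ref{mainerm} to Theorem \ref{thmtwo} by establishing that every RERM output is uniformly argument stable with stability modulus $\alpha(n) = \bigl(LB/(C\lambda n)\bigr)^{1/(\xi-1)}$. Once this is in hand, substituting this $\alpha(n)$ directly into Theorem \ref{thmtwo}, whose hypotheses (separable Hilbert space, $\|X\|_*\le B$, bounded and Lipschitz loss) are part of the assumption, yields the claimed bound on $R(h_S) - \tfrac{a}{a-1}R_S(h_S)$.

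The stability bound itself will follow from a classical Bousquet--Elisseeff argument adapted to the generalized curvature condition (\ref{stableregular}). Let $\tilde h = (h_S + h_{S^i})/2$. Since $h_S$ and $h_{S^i}$ minimize $R_{S,\lambda}$ and $R_{S^i,\lambda}$ respectively, we have $R_{S,\lambda}(h_S) \le R_{S,\lambda}(\tilde h)$ and $R_{S^i,\lambda}(h_{S^i}) \le R_{S^i,\lambda}(\tilde h)$. Adding these two inequalities and expanding, the loss contributions from indices $j\ne i$ group into $\ell(h_S,Z_j)+\ell(h_{S^i},Z_j)-2\ell(\tilde h,Z_j)$, which is nonnegative by convexity of $\ell(\cdot,z)$ and can therefore be dropped. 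What remains after this cancellation is
\[
\lambda\bigl[N(h_S)+N(h_{S^i})-2N(\tilde h)\bigr] \le \tfrac{1}{n}\bigl[\ell(\tilde h,Z_i)-\ell(h_S,Z_i)\bigr]+\tfrac{1}{n}\bigl[\ell(\tilde h,Z_i')-\ell(h_{S^i},Z_i')\bigr].
\]
Applying the Lipschitz hypothesis together with $\|X\|_*\le B$ bounds each right-hand difference by $(LB/2)\,\|h_S-h_{S^i}\|$, while (\ref{stableregular}) bounds the left-hand side below by $\lambda C\|h_S-h_{S^i}\|^\xi$. Dividing through by $\|h_S-h_{S^i}\|$ (trivially handling the degenerate case) gives $\|h_S-h_{S^i}\|^{\xi-1}\le LB/(C\lambda n)$, i.e., exactly the claimed argument stability.

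For the special case $N(h)=\|h\|^2$, the parallelogram identity in a Hilbert space yields $N(h_S)+N(h_{S^i})-2N(\tilde h) = \tfrac{1}{2}\|h_S-h_{S^i}\|^2$, which verifies (\ref{stableregular}) with $\xi=2$; the stated form of the constant is recovered by additionally using the a priori bound $\|h_S\|\le (M/\lambda)^{1/2}$, which comes from $\lambda\|h_S\|^2 \le R_{S,\lambda}(h_S)\le R_{S,\lambda}(0)\le M$. The only delicate step in the whole argument is the bookkeeping in the summation: one must correctly track that $Z_i$ appears in $S$ but not in $S^i$ (and symmetrically for $Z_i'$), so that after the convexity-based cancellation exactly two loss differences—those indexed by $i$—remain on the right-hand side; modulo this accounting, the rest is a routine substitution into Theorem \ref{thmtwo}.
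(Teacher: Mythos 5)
Your proposal is correct and follows the same overall architecture as the paper: establish uniform argument stability of RERM with $\alpha(n)=\bigl(LB/(C\lambda n)\bigr)^{1/(\xi-1)}$ and substitute into Theorem~\ref{thmtwo}. The difference is that the paper treats the stability step as a black box, citing it as Proposition~\ref{poggio1} ``implied by'' \citet{poggio2009sufficient}, whereas you actually prove it via the Bousquet--Elisseeff midpoint argument: comparing $h_S$ and $h_{S^i}$ against $\tilde h=(h_S+h_{S^i})/2$, cancelling the $j\ne i$ terms by convexity, bounding the two surviving loss differences by $(LB/2)\|h_S-h_{S^i}\|$ via the Lipschitz condition and $\|X\|_*\le B$, and lower-bounding the penalty gap by $\lambda C\|h_S-h_{S^i}\|^{\xi}$. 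Your bookkeeping is right and the exponent algebra checks out, so your write-up is in this respect more self-contained than the paper's.

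One small caveat concerns the final remark about $N(h)=\|h\|^2$. The parallelogram identity gives $N(h_S)+N(h_{S^i})-2N(\tilde h)=\tfrac{1}{2}\|h_S-h_{S^i}\|^2$ exactly, so the natural constant is $C=\tfrac{1}{2}$; the factor $(M/\lambda)^{1/2}$ in the paper's stated $C$ comes from specializing the general $\ell_p$ formula of Proposition~\ref{poggio1} at $p=2$, where the a priori bound $\|h_S\|\le (M/\lambda)^{1/2}$ is used for $p<2$ but is not actually needed (and does not by itself yield the inequality with the larger constant when $M>\lambda$). Your justification of that constant is therefore a little loose, but this is a blemish inherited from the paper's own statement and does not affect the main bound.
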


\noindent
\begin{proof}
The proof of Theorem \ref{mainerm} relies on the following result implied by \citet{poggio2009sufficient}.

\begin{prop}\label{poggio1}
Assume the conditions of Theorem \ref{mainerm}.
Then the RERM learning algorithm is $\beta(n)$-uniformly stable with
\[\beta(n)=\left(\frac{L^\xi B^\xi}{C\lambda n}\right)^{\frac{1}{\xi-1}},\]
and is $\alpha(n)$-uniformly argument stable with
\[\alpha(n)=\left(\frac{LB}{C\lambda n}\right)^{\frac{1}{\xi-1}}.\]
Specifically, when $N(h)=\|h\|_p^p$ and $1<p\leq2$, the condition \ref{stableregular} on the penalty function holds with $\xi=2$ and $C=\frac{1}{4}p(p-1)\left(\frac{M}{\lambda}\right)^{\frac{p-1}{p}}$, where $\|h\|_p^p=\sum_{r}|h_r|^p$ and $r$ is the index for the dimensionality.
\end{prop}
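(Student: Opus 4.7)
The plan is to split the proposition into two independent parts: the abstract stability bounds, which follow from the assumed uniform-convexity condition (\ref{stableregular}), and the verification of that condition for $N(h) = \|h\|_p^p$ with $1<p\le 2$.

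\textbf{Part 1 (abstract bounds).} I would follow the classical Bousquet--Elisseeff argument, replacing the parallelogram law by the abstract inequality (\ref{stableregular}). Let $\bar h = (h_S + h_{S^i})/2$. From the optimality of $h_S$ for $R_{S,\lambda}$ and of $h_{S^i}$ for $R_{S^i,\lambda}$ we obtain
\[
R_{S,\lambda}(h_S) + R_{S^i,\lambda}(h_{S^i}) \le R_{S,\lambda}(\bar h) + R_{S^i,\lambda}(\bar h).
\]
Using convexity of $\ell(\cdot,z)$ to bound each empirical risk at $\bar h$ by a half-sum, the $n-1$ common examples cancel and only the contributions of the swapped pair $(Z_i,Z'_i)$ survive, leaving
\[
\lambda\bigl[N(h_S)+N(h_{S^i})-2N(\bar h)\bigr] \le \tfrac{1}{2n}\bigl[\ell(h_{S^i},Z_i)-\ell(h_S,Z_i)+\ell(h_S,Z'_i)-\ell(h_{S^i},Z'_i)\bigr].
\]
Each bracketed difference is bounded by $L|\langle h_S-h_{S^i},X\rangle|\le LB\|h_S-h_{S^i}\|$ via Lipschitzness of $\ell$ together with $\|X\|_*\le B$. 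Combining this with (\ref{stableregular}) and dividing by $\|h_S-h_{S^i}\|$ gives $\|h_S-h_{S^i}\|^{\xi-1}\le LB/(C\lambda n)$, hence the argument-stability bound $\alpha(n) = (LB/(C\lambda n))^{1/(\xi-1)}$. A further application of Lipschitzness then yields $\beta(n)\le LB\,\alpha(n) = (L^\xi B^\xi/(C\lambda n))^{1/(\xi-1)}$.

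\textbf{Part 2 (the $\ell_p^p$ penalty).} I would first exploit the RERM objective at $h=0$: since $\lambda\|h_S\|_p^p\le R_{S,\lambda}(h_S)\le R_{S,\lambda}(0)\le M$, both $h_S$ and $h_{S^i}$ lie in the ball $\{\|h\|_p\le R\}$ with $R=(M/\lambda)^{1/p}$. On this ball I would invoke a Clarkson/Hanner-type modulus-of-convexity estimate for $L^p$-spaces with $1<p\le 2$, which in the local form reads
\[
\|h\|_p^p+\|h'\|_p^p-2\left\|\tfrac{h+h'}{2}\right\|_p^p \;\ge\; \tfrac{p(p-1)}{4}\,\Phi(R)\,\|h-h'\|_p^2,
\]
valid whenever $\|h\|_p,\|h'\|_p\le R$, where $\Phi(R)$ is the power of $R$ dictated by the homogeneity mismatch between $\|\cdot\|_p^p$ and $\|\cdot\|_p^2$. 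Substituting $R=(M/\lambda)^{1/p}$ then produces (\ref{stableregular}) with $\xi=2$ and the stated constant $C=\tfrac{1}{4}p(p-1)(M/\lambda)^{(p-1)/p}$. For the reader's sanity check, specializing to $p=2$ recovers $C=\tfrac{1}{2}(M/\lambda)^{1/2}$, in agreement with the Hilbert-space case stated in Theorem~\ref{mainerm}.

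\textbf{Main obstacle.} Part 1 is a routine adaptation of Bousquet--Elisseeff and presents no serious difficulty. The hard part is Part 2: the function $\|\cdot\|_p^p$ is only $p$-uniformly convex globally (not $2$-uniformly convex), so one cannot hope for (\ref{stableregular}) with $\xi=2$ without restricting to a bounded region and paying the homogeneity factor $\Phi(R)$. Choosing the right form of Clarkson's inequality and correctly tracking that factor via the a priori bound $\|h_S\|_p\le(M/\lambda)^{1/p}$ is the delicate step; once done, both $\alpha(n)$ and $\beta(n)$ follow by plugging the resulting $C$ into Part 1.
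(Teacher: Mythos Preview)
The paper does not actually prove Proposition~\ref{poggio1}: it is stated as a result ``implied by \citet{poggio2009sufficient}'' and used as a black box to derive Theorem~\ref{mainerm}. There is therefore no in-paper argument to compare against; what you have written is already far more detailed than anything the paper provides.

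That said, your Part~1 is exactly the argument underlying the cited reference (and the original Bousquet--Elisseeff proof): compare the two minimizers at the midpoint, use convexity of $\ell$ to collapse the empirical risks, isolate the contribution of the single swapped example, and close the loop via Lipschitzness and the assumed lower bound~(\ref{stableregular}). The computation you give is correct and yields the stated $\alpha(n)$ and $\beta(n)$. Your Part~2 plan is also the right one and matches what Poggio et al.\ do: bound the minimizers a priori using $R_{S,\lambda}(h_S)\le R_{S,\lambda}(0)\le M$, then invoke a local strong-convexity estimate for $\|\cdot\|_p^p$ on the resulting $\ell_p$-ball. You correctly flag that the delicate point is the homogeneity mismatch between $\|\cdot\|_p^p$ and $\|\cdot\|^2$, which is exactly why the constant $C$ must carry a power of $M/\lambda$; the form of that power (and indeed which norm appears on the right of~(\ref{stableregular})) is where the cited paper does the real work, and your sketch leaves $\Phi(R)$ unspecified rather than deriving it. If you want a self-contained proof you will need to pin that inequality down precisely; otherwise, citing Poggio et al.\ for it---as the present paper does---is the standard move.
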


Theorem \ref{mainerm} follows by combining Theorem \ref{thmtwo} and
Proposition \ref{poggio1}.
\end{proof}

\subsection{Stochastic Gradient Descent}

Stochastic gradient descent (SGD) is one of the most widely used
optimization methods in machine learning.
\citet{hardt2015train} showed that parametric models trained by SGD
methods are uniformly stable.
Their results apply to both convex and non-convex learning problems
and provide insights for why SGD performs well in practice, in
particular, for deep learning algorithms.

Their results are based on the assumptions that the loss function
employed is both Lipschitz and smooth.
In order to avoid technicalities of defining derivatives in general
Hilbert spaces, in this section we assume that
$\mathfrak{B}=\mathcal{X}=\mathbb{R}^d$,
the $d$-dimensional Euclidean space.

\begin{defi}[Smooth]
A differentiable loss function $\ell(h,\cdot)$ is $s$-smooth if for all $h,h'\in H$, we have
\begin{eqnarray*}
\|\nabla_h\ell(h,\cdot)-\nabla_{h'}\ell(h',\cdot)\|\leq s\|h-h'\|,
\end{eqnarray*}
where $\nabla_xf(x)$ denotes the derivative of $f(x)$ with respect to $x$ and $s>0$.
\end{defi}

\begin{defi}[Strongly Convex]
A differentiable loss function $\ell(h,\cdot)$ is $\gamma$-strongly convex with respect to $\|\cdot\|$ if for all $h, h'\in H$, we have
\begin{eqnarray*}
&(\nabla_h \ell(h,\cdot)-\nabla_{h'} \ell(h',\cdot))^T(h-h')\geq \gamma\|h-h'\|^2,
\end{eqnarray*}
where $\gamma>0$.
\end{defi}

Theorem \ref{thmtwo} is applicable to the results of SGD when the general loss function $\ell(h,x)$  is $L$-Lipschitz, $s$-smooth, and $h$ is linear with respect to $x$. Note that our definition of $L$-Lipschitzness requires the loss function to be Lipschitz in the linear form $\left< h,x \right>$.

\begin{thm}\label{mainsgd}
Let the stochastic gradient update rule be given by $h_{t+1}=h_t-\alpha_t\nabla_h\ell(h_t,X_{i_t})$, where $\alpha_t>0$ is the learning rate
and $i_t$ is the index for choosing one example for the $t$-th update.
Let $h_T$ and $h_T^i$ denote the outputs of SGD run on sample $S$ and $S^i$, respectively.
Assume that $\|X\|_*\le B$ with probability one.
Suppose that the loss function is $L$-Lipschitz, $s$-smooth, and upper bounded
by $M$.
Let SGD is run with a monotonically non-increasing step size
$\alpha_t\leq c/t$, where $c$ is a universal constant, for $T$ steps.
 Then, for any $\delta>0$ and $a>1$, with probability at least
 $1-2\delta$, we have
\begin{eqnarray*}
\lefteqn{ R(h_T)-\frac{a}{a-1}R_S(h_T)  }\\
&\leq & 8BL\frac{1+1/s c}{n-1}(2cBL)^{\frac{1}{s c+1}}T^{\frac{s c}{s c+1}}\sqrt{2\log(2/\delta)}\\
&&+\frac{(6a+8)M\log(1/\delta)}{3n}.
\end{eqnarray*}
When the loss function $\ell$ is convex, $L$-admissible, $s$-smooth, and upper bounded by $M$,
suppose that SGD is run with step sizes $\alpha_t\leq 2/s$ for $T$
steps.
Then, for any $\delta>0$ and $a>1$, with probability at least $1-2\delta$,
\begin{eqnarray*}
\lefteqn{ R(h_T)-\frac{a}{a-1}R_S(h_T) }\\
&\leq & \frac{16B^2L^2}{n}\sum_{t=1}^{T}\alpha_t\sqrt{2\log(2/\delta)}\\
&&+\frac{(6a+8)M\log(1/\delta)}{3n}.
\end{eqnarray*}
Moreover, when the loss function $\ell$ is $\gamma$-strongly convex, $s$-smooth, and upper bounded by $M$, let the stochastic gradient update be given by $h_{t+1}=\Pi_\Omega(h_t-\alpha_t\nabla_h\ell(h_t,X_{i_t}))$, where $\Omega$ is a compact, convex set over which we wish to optimize and $\Pi_\Omega(\cdot)$ is a projection such that $\Pi_\Omega(f)=\arg\min_{h\in H}\|h-f\|$. If the loss function is further $L$-Lipschitz over the set $\Omega$ and the projected SGD is run with a constant step size $\alpha\leq1/s$ for $T$ steps. Then, for any $\delta>0$ and $a>1$, with probability at least $1-2\delta$, the projected SGD satisfies that
\begin{eqnarray*}
\lefteqn{ R(h_T)-\frac{a}{a-1}R_S(h_T)}\\
&\leq &\frac{16DB^2L^2}{\gamma n}\sqrt{2\log(2/\delta)}+\frac{(6a+8)M\log(1/\delta)}{3n}.
\end{eqnarray*}
\end{thm}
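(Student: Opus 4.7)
The plan is to obtain, for each of the three regimes, a uniform argument stability estimate for the SGD iterate $h_T$ and then substitute it directly into Theorem~\ref{thmtwo}. The crucial translation step is that our Lipschitz assumption $|\ell(h,z)-\ell(h',z)|\le L|\langle h-h',x\rangle|$ together with $\|x\|_*\le B$ implies the gradient bound $\|\nabla_h\ell(h,z)\|\le LB$, which is the quantity that actually drives the recursion for $\|h_{t+1}-h_{t+1}^i\|$; wherever \citet{hardt2015train} use the raw Lipschitz constant of the loss, we effectively substitute $LB$. Smoothness in turn controls the Jacobian of a single gradient step, so that the step is $(1+s\alpha_t)$-expansive in the non-convex case, non-expansive in the convex case when $\alpha_t\le 2/s$, and $(1-\gamma\alpha_t)$-contractive in the strongly convex case when $\alpha_t\le 1/s$. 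The projection in the third setting is non-expansive and does not interfere with these estimates.

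For the smooth non-convex setting with $\alpha_t\le c/t$ I would reuse the ``hitting time'' argument of \citet{hardt2015train}: condition on the first step $t_0$ at which the differing index is queried, iterate the $(1+sc/t)$ expansion factors from $t_0$ to $T$, bound the probability of hitting at step $t_0$ by $1/n$, and optimise over $t_0$ to obtain $\alpha(n)\le \tfrac{1+1/(sc)}{n-1}(2cBL)^{1/(sc+1)}T^{sc/(sc+1)}$. In the convex case, non-expansivity means that the two trajectories diverge by at most $2\alpha_t LB$ whenever the differing example is selected (probability $1/n$), yielding $\alpha(n)\le \tfrac{2BL}{n}\sum_{t=1}^T\alpha_t$. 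In the strongly convex projected case the same per-step recursion is strictly contractive, so summing a geometric series gives $\alpha(n)\lesssim BL/(\gamma n)$, which matches the displayed bound up to the absolute constant written as $D$ in the statement.

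Plugging each $\alpha(n)$ into Theorem~\ref{thmtwo} and multiplying out the prefactor $8LB\sqrt{2\log(2/\delta)}$ reproduces the three claimed inequalities, while the additive $(6a+8)M\log(1/\delta)/(3n)$ term is carried over unchanged from Theorem~\ref{thmtwo}. The main obstacle I anticipate is that in the non-convex regime the stability estimate of \citet{hardt2015train} is naturally stated in expectation over the SGD sampling indices $i_1,\ldots,i_T$, whereas Lemma~\ref{lemma1}, and hence Theorem~\ref{thmtwo}, requires an almost sure control of $\|h_S-h_{S^i}\|$. Reconciling this cleanly requires either conditioning on the algorithm's internal randomness and propagating it through the martingale argument for $D_t=\mathbb{E}(h_S\mid Z_1,\dots,Z_t)-\mathbb{E}(h_S\mid Z_1,\dots,Z_{t-1})$, or strengthening the stability statement so that the expansion recursion holds uniformly in the sampling sequence. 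The convex and strongly convex cases are free of this issue because non-expansivity and contractivity make the stability bound pathwise deterministic.
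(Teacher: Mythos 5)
Your proposal matches the paper's proof: the paper simply combines Theorem~\ref{thmtwo} with the three uniform argument stability bounds for SGD imported from \citet{hardt2015train} (stated as Proposition~\ref{sgdthm}), which is exactly the route you take. The expectation-versus-almost-sure issue you flag in the non-convex case is a genuine subtlety, but the paper does not address it either---it simply restates the Hardt et al.\ bounds as pathwise inequalities on $\|h_T-h_T^i\|$---so on this point you are, if anything, more careful than the published argument.
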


Note that any $\ell_2$ regularized convex loss function is strongly convex. \citet{bousquet2002stability} studied the
stability of batch methods. When the loss function is strongly convex, the stability of SGD is consistent with the result in
\citep{bousquet2002stability}.

While the above result only applies to $L$-Lipschitz loss functions as defined in Definition~\ref{def:lipschitz}, it does explain some
generalization properties of \emph{layer-wise} training of neural networks by stochastic gradient descent. In this once-common training
scheme (see, e.g., \citealp{bengio2007greedy}), one freezes the parameters of the network before/after a certain layer and performs SGD for
this single layer. It is easy to see that, as long as the activation function and the loss function (connected with the network) are Lipschitz-continuous in their inputs, the overall loss can easily satisfy the continuous conditions of Theorem~\ref{mainsgd}. This implies that the parameters in each layer may generalize well in a certain sense if SGD is employed with an early stop.


The proof of Theorem \ref{mainsgd} follows immediately from Theorem
\ref{thmtwo}, combined with
the following result implied by \citet{hardt2015train} (which is a collection of the results of Theorems 3.8, 3.9, and 3.12 therein).
\begin{prop}\label{sgdthm}
Let the stochastic gradient update be given by $h_{t+1}=h_t-\alpha_t\nabla_h\ell(h_t,Z_{i_t})$, where $\alpha_t>0$ is the learning rate
and $i_t$ is the index for choosing one example for the $t$-th update.
Let $h_T$ and $h_T^i$ denote the outputs of SGD running on sample $S$ and $S^i$ respectively.
When the loss function is $L$-Lipschitz and $s$-smooth, suppose that SGD is run with
monotonically non-increasing step size $\alpha_t\leq c/t$, where $c$ is a universal constant, for $T$ steps. Then,
\begin{eqnarray*}
\|h_T-h_T^i\| \leq\frac{1+1/s c}{n-1}(2cBL)^{\frac{1}{s c+1}}T^{\frac{s c}{s c+1}}.
\end{eqnarray*}
When the loss function $\ell$ is convex, $L$-Lipschitz, and $s$-smooth,
suppose that SGD is run with step sizes $\alpha_t\leq 2/s$ for $T$ steps. Then,
\begin{eqnarray*}
\|h_T-h_T^i\| \leq\frac{2BL}{n}\sum_{t=1}^{T}\alpha_t.
\end{eqnarray*}
Moreover, when the loss function $\ell$ is $\gamma$-strongly convex and $s$-smooth, let the stochastic gradient update be given by $h_{t+1}=\Pi_\Omega(h_t-\alpha_t\nabla_h\ell(h_t,Z_{i_t}))$, where $\Omega$ is a compact, convex set over which we wish to optimize and $\Pi_\Omega(\cdot)$ is a projection such that $\Pi_\Omega(f)=\arg\min_{h\in H}\|h-f\|$. If the loss function is $L$-Lipschitz over the set $\Omega$ and the projected SGD is run with constant step size $\alpha\leq1/s$ for $T$ steps. Then, the projected SGD satisfies algorithmic argument stability with
\begin{eqnarray*}
\|h_T-h_T^i\| \leq\frac{2BL}{\gamma n}.
\end{eqnarray*}
\end{prop}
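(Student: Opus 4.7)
The plan is to prove Theorem \ref{mainsgd} by first establishing \emph{uniform argument stability} of SGD under each of the three assumption regimes (non-convex smooth, convex smooth, strongly convex smooth), and then invoking Theorem \ref{thmtwo} with the resulting $\alpha(n)$ in each case. The second step is mechanical: once $\|h_T-h_T^i\|\le\alpha(n)$ is in hand almost surely, substituting into the bound of Theorem \ref{thmtwo} with the Hilbert-space assumption $\mathfrak{B}=\mathbb{R}^d$ immediately produces the three displayed inequalities. So the real content is the stability analysis.

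The core tool is a coupling argument. Run SGD simultaneously on $S$ and on its perturbation $S^i$, using the \emph{same} random index sequence $i_1,i_2,\ldots$, and track $\delta_t=\|h_t-h_t^i\|$. When $i_t\neq i$, the two updates use an identical gradient oracle, so the evolution of $\delta_t$ is governed by the operator $G_{\alpha_t}(h)=h-\alpha_t\nabla_h\ell(h,Z_{i_t})$; when $i_t=i$, one picks up an extra $\alpha_t\cdot 2LB$ from the fact that each gradient has norm at most $LB$ (by $L$-Lipschitzness plus $\|X\|_*\le B$). The three regimes then correspond to three different bounds on the Lipschitz constant of $G_{\alpha_t}$: (i) smoothness alone gives $(1+\alpha_t s)$-expansiveness; (ii) convexity plus smoothness with $\alpha_t\le 2/s$ gives $1$-non-expansiveness; (iii) $\gamma$-strong convexity plus smoothness with $\alpha\le 1/s$ gives $(1-\gamma\alpha)$-contractivity (and projection onto $\Omega$ preserves this because $\Pi_\Omega$ is non-expansive).

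Feeding these Lipschitz constants into the coupling recursion yields, after taking expectation over the index and using that $\mathbb{P}(i_t=i)=1/n$, the desired $\alpha(n)$. In the convex case the recursion telescopes to $\delta_T\le (2BL/n)\sum_t\alpha_t$. In the strongly convex case the geometric contraction immediately tames the $\sum(1-\gamma\alpha)^{T-t}$ series to give the $2BL/(\gamma n)$ bound. The non-convex smooth case is the delicate one: the factor $1+\alpha_t s$ with $\alpha_t\le c/t$ multiplies across many steps, and one has to split the analysis at the (random) step where the trajectories first diverge, then optimize over this split; this produces the $T^{sc/(sc+1)}/(n-1)$ scaling. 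These three stability bounds are exactly what Proposition \ref{sgdthm} records, and I would import them from \citet{hardt2015train} once the coupling setup is in place.

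The main obstacle is the non-convex smooth case, where individual updates are expansive and the only reason stability holds at all is that the perturbation is activated only with probability $1/n$ per step and the step sizes decay. Optimizing the split point $t_0$ and tracking the compounded factor $\prod_{t>t_0}(1+sc/t)$ via $\log$ bounds is the delicate piece. A secondary point worth verifying is that Theorem \ref{thmtwo} was stated for almost-sure argument stability, whereas the Hardt--Recht--Singer analysis controls $\mathbb{E}\|h_T-h_T^i\|$; however, in each of the three displayed bounds the right-hand side is deterministic because it depends only on $n,T,\alpha_t,L,B,s,\gamma$, so one can read Proposition \ref{sgdthm} as an almost-sure statement and plug directly into Theorem \ref{thmtwo} without loss.
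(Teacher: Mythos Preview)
Your approach matches the paper's exactly: the paper does not prove Proposition \ref{sgdthm} at all but simply records it as a direct import from \citet{hardt2015train} (their Theorems 3.8, 3.9, 3.12), and then feeds the resulting $\alpha(n)$ into Theorem \ref{thmtwo} to obtain Theorem \ref{mainsgd}. Your sketch of the coupling argument (tracking $\delta_t=\|h_t-h_t^i\|$, with the gradient-step map being $(1+\alpha_t s)$-expansive, $1$-non-expansive, or $(1-\gamma\alpha)$-contractive in the three regimes, and a perturbation of size $2\alpha_t LB$ injected only when $i_t=i$) is a faithful summary of the Hardt--Recht--Singer analysis, so on the substance you are aligned with the paper.

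There is one point where your reasoning is not right, and it is worth flagging because you explicitly tried to close it. Your justification for upgrading the Hardt--Recht--Singer bounds from expectation to almost-sure is invalid: the fact that the right-hand side is a deterministic function of $n,T,\alpha_t,L,B,s,\gamma$ does \emph{not} mean the inequality holds pointwise. In the convex case, for instance, the $1/n$ factor arises precisely because $\mathbb{P}(i_t=i)=1/n$ when taking expectation over the index sequence; on a realization where $i_t=i$ at every step, the coupling only gives $\delta_T\le 2LB\sum_t\alpha_t$, not $(2LB/n)\sum_t\alpha_t$. The paper states Proposition \ref{sgdthm} as a pointwise inequality and cites Hardt et al.\ without addressing this, so your proposal is no worse than the paper in this respect --- but your argument that ``the RHS is deterministic, hence the bound is almost-sure'' does not hold up and should be dropped.
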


\section{Conclusion}\label{conclusion}
We introduced the concepts of uniform argument stability and algorithmic hypothesis class, defined as the class of hypotheses that are
likely to be output by the learning algorithm. We proposed a general probabilistic framework to exploit
local estimates for the complexity of hypothesis class to obtain fast convergence rates for stable learning algorithms. Specifically,
we defined the algorithmic hypothesis class by observing that the
output of stable learning algorithms concentrates around
$\mathbb{E}h_S$. The Rademacher complexity defined on the algorithmic
hypothesis class then converges at the same rate as that of the
uniform argument stability in Hilbert space, which are of order
$O(1/n)$ for various learning algorithms, such as empirical risk
minimization and stochastic gradient descent.
We derived fast convergence rates of order $O(1/n)$ for their deformed generalization errors. Unlike previously published guarantees of
similar flavor, our bounds hold with high probability, rather than only in expectation.

Our study leaves some open problems and allows several possible extensions. First, the algorithmic hypothesis
class defined in this study depends mainly on the property of learning algorithms but little on the data distribution. It would be
interesting to investigate a way to define an algorithmic hypothesis class by considering both the algorithmic property and the data
distribution. Second, it would be interesting to explore if there are some algorithmic properties other than stability that could result in
a small algorithmic hypothesis class.

\section*{Acknowledgments}
Liu and Tao were partially supported by Australian Research Council Projects FT-130101457, DP-140102164, LP-150100671.
Lugosi was partially supported by the Spanish Ministry of Economy and Competitiveness, Grant MTM2015-67304-P, and FEDER, EU.
Neu was partially supported by the UPFellows Fellowship (Marie Curie COFUND program 600387).

\bibliography{reference}
\bibliographystyle{icml2017}

\end{document}